\DeclareRobustCommand{\greektext}{%
  \fontencoding{LGR}\selectfont\def\encodingdefault{LGR}}
\DeclareRobustCommand{\textgreek}[1]{\leavevmode{\greektext #1}}
\numberwithin{equation}{section}
\theoremstyle{plain}
\newtheorem{thm}{\protect\theoremname}
  \theoremstyle{plain}
  \newtheorem{lem}[thm]{\protect\lemmaname}
  \theoremstyle{plain}
  \newtheorem{prop}[thm]{\protect\propositionname}
  \providecommand{\lemmaname}{Lemma}
  \providecommand{\propositionname}{Proposition}
\providecommand{\theoremname}{Theorem}
\begin{document}

\title{Nonparametric Testing for Heterogeneous Correlation}

\author{Stephen Bamattre, Rex Hu and Joseph S. Verducci}

\keywords{Absolute rank differences; Beta distribution; Frank copula; Gaussian
copula; Kendall\textquoteright s tau; Mallows' model; Multistage ranking
model; Permutations; Seriation.}
\begin{abstract}
\noindent In the presence of weak overall correlation, it may be useful
to investigate if the correlation is significantly and substantially
more pronounced over a subpopulation. Two different testing procedures
are compared. Both are based on the rankings of the values of two
variables from a data set with a large number n of observations. The
first maintains its level against Gaussian copulas; the second adapts
to general alternatives in the sense that that the number of parameters
used in the test grows with $n$. An analysis of wine quality illustrates
how the methods detect heterogeneity of association between chemical
properties of the wine, which are attributable to a mix of different
cultivars. 
\end{abstract}

\maketitle

\section*{Introduction}

The goal of this paper is to offer new methods for discovering association
between two variables that is supported only in a subpopulation. For
example, while higher counts of HDLs are generally associated with
lower risk of myocardial infarction, researchers (Voight et al., 2012;
Katz, 2014) have found subpopulations that do not adhere to this trend.
In marketing, subpopulations of designated marketing areas (DMAs)
in the US respond differentially to TV advertising campaigns, and
the identification of DMAs that are sensitive to ad exposure enables
efficient spending of ad dollars. In preclinical screening of potential
drugs, various subpopulations of chemicals elicit concomitant responses
from sets of hepatocyte genes, which can be used to discover gene
networks that breakdown classes of drugs, without having to pre-specify
how the classes are formed. The new methods thus lead to a whole new
approach to analysis of large data sets. 

When covariates are available, regression analysis classically attempts
to identify a supporting subpopulation via interaction effects, but
these may be difficult to interpret properly. In the presence of overall
correlation, it may be useful to investigate directly if the correlation
is significantly and substantially more pronounced over a subpopulation.
This becomes feasible when representatives of supporting subpopulations
are embedded in large samples. The novel statistical tests described
in this paper are designed to probe large samples to ascertain if
there is such a subpopulation. 

The general setting is this: A large number \emph{n} of observations
are sampled from a bivariate continuous distribution. The basic assumption
is that the population consists of two subpopulations. In one, the
two variables are positively (or negatively) associated; in the other,
the two variables are independent. While some distributional assumptions
are required even to define the notion of homogeneous association,
the underlying intent is to make the tests robust to assumptions about
the distributions governing both the null and alternative hypotheses. 

Notation for the rest of the paper is as follows: Let $X\sim F$ and
$Y\sim G$ have joint, continuous distribution H. For any sample $\left\{ \left(x_{i},y_{i}\right)\mid i=1,\ldots,n\right\} $,
the empirical marginal distributions are defined by
\begin{eqnarray*}
\hat{F}_{n}\left(x\right)=\frac{1}{n}\sum_{i=1}^{n}1\left\{ x_{i}\leq x\right\}  & \textrm{and} & \hat{G}_{n}\left(y\right)=\frac{1}{n}\sum_{i=1}^{n}1\left\{ y_{i}\leq y\right\} .
\end{eqnarray*}
The ranking $\pi$ of the sample $\left\{ x_{i}\mid i=1,\ldots,n\right\} $
is the function\\
 $\pi:\left\{ x_{i}\mid i=1,\ldots,n\right\} \rightarrow\left\{ 1,\ldots,n\right\} $
defined by
\[
\pi\left(x_{i}\right)=\sum_{j=1}^{n}1\left\{ x_{i}\leq x_{j}\right\} .
\]
The corresponding ranking of $\left\{ y_{i}\mid i=1,\ldots,n\right\} $
is denoted by $\nu$. Spearman\textquoteright s footrule distance
with a sample $\left\{ \left(x_{i},y_{i}\right)\mid i=1,\ldots,n\right\} $
is defined through the sample rankings as
\[
d_{S}=\sum_{i=1}^{n}\left|\pi\left(x_{i}\right)-\nu\left(y_{i}\right)\right|.
\]
The Kendall distance associated with the sample is defined as
\begin{eqnarray*}
D_{K}\left(\left[x_{1},\ldots,x_{n}\right],\left[y_{1},\ldots,y_{n}\right]\right) & = & \sum_{i<j}1\left\{ \left(x_{i}-x_{j}\right)\left(y_{i}-y_{j}\right)<0\right\} \\
 & = & \sum_{i<j}1\left\{ \left(\pi\left(x_{i}\right)-\pi\left(x_{j}\right)\right)\left(\nu\left(y_{i}\right)-\nu\left(y_{j}\right)\right)<0\right\} \\
 & = & d_{K}\left(\pi,\nu\right)
\end{eqnarray*}
which depends only on the rankings $\pi$ and $\nu$ of the sample
$\left\{ x_{i}\right\} $ and $\left\{ y_{i}\right\} $. Mallows (1957)
model for rankings takes the form
\[
P_{\phi}\left(\nu\mid\pi\right)=C\left(\phi\right)e^{-\phi d_{K}\left(\pi,\nu\right)}
\]
where the normalizing constant $C\left(\phi\right)$ has a tractable
form (Fligner and Verducci, 1986) known as a Poincare polynomial (Diaconis
and Graham, 2000). Distributional forms for the data are in terms
of copulas:
\[
C_{H}\left(F\left(X\right),G\left(Y\right)\right)=H\left(X,Y\right)
\]
which are distribution functions on the unit square, having uniform
margins. Two copulas play a fundamental role in motivating the tests:
the Gaussian copula and the Frank Copula. If $\left(X,Y\right)$ has
a bivariate normal distribution $H$ with correlation $\rho$, then
its corresponding copula is
\[
C_{\rho}\left(u,v\right)=\Phi_{2}\left(\Phi^{-1}\left(u\right),\Phi^{-1}\left(v\right);\rho\right)
\]
where $\Phi$ is the standard normal CDF. The bivariate distributions
$C_{\rho}$ and $\Phi_{2}$ are indexed solely by the underlying correlation
$\rho$. The Frank copula (Frank, 1979; Genest, 1987) has the form
\[
C_{\theta}\left(u,v\right)=-\frac{1}{\theta}\log\left(1+\frac{\left(e^{-\theta u}-1\right)\left(e^{-\theta v}-1\right)}{\left(e^{-\theta}-1\right)}\right).
\]
The next two sections describe two new tests for detecting subpopulations
that support association: the Components of Spearman\textquoteright s
Footrule (CSF) test and the Components of Kendall\textquoteright s
Tau (CKT) test. The CSF test is scaled according to a Gaussian copula
and the CKT test is scaled according to a Frank copula. The CSF test
is computationally fast, and the CKT test adapts to a large variety
of alternatives. The following two sections cover their performance
under simulations. Concluding remarks are in last section.

\section*{Components of Spearman\textquoteright s Footrule (CSF)}

While Spearman\textquoteright s footrule (Diaconis and Graham, 1977)
measures the overall disarray in a sample, the distribution of individual
absolute rank differences
\[
d_{i}=\left|\pi\left(x_{i}\right)-\nu\left(y_{i}\right)\right|
\]
proves to be very useful in detecting subsamples with distinctly less
disarray than would be expected under homogeneous association. Because
the rankings depend on the whole sample, the $\left\{ d_{i}\right\} $
are not independent. Nevertheless, we loosely define their \emph{empirical
distribution} as
\[
S_{n}\left(d\right)=\frac{1}{n}\sum_{i=1}^{n}1\left\{ d_{i}\leq d\right\} .
\]
As a step toward determining asymptotic forms for this distribution,
we offer the following lemmas.
\begin{lem}
For any sample $\left\{ \left(X_{i},Y_{i}\right)\mid i=1,\ldots,n\right\} $,
from a joint distribution $H$ with compact support, let (X,Y) be
a newly, independent sampled observation. Then, for rankings $\pi$
and $\nu$ for the extended sample of $n+1$ observations, 
\[
\left[\frac{\pi\left(X\right)}{n+1},\frac{\nu\left(Y\right)}{n+1}\right]\underset{a.s.}{\rightarrow}\left[F\left(X\right),G\left(Y\right)\right]
\]
and its asymptotic distribution is the underlying copula $C_{H}\left[F\left(X\right),G\left(Y\right)\right]$
of $H$. 
\end{lem}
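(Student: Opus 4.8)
The plan is to recognize the normalized ranks $\pi(X)/(n+1)$ and $\nu(Y)/(n+1)$ as empirical distribution functions of the enlarged sample, evaluated at the new observation, and then to replace those empirical CDFs by the population marginals via a uniform law of large numbers. Write the enlarged sample as $(X_{0},Y_{0})=(X,Y)$ together with $(X_{1},Y_{1}),\dots,(X_{n},Y_{n})$. Since $H$ is continuous, ties occur on a null set, so almost surely the rank of $X$ among $X_{0},\dots,X_{n}$ is $\pi(X)=\sum_{j=0}^{n}1\{X_{j}\le X\}$ (reading $\pi$ as the increasing rank), whence
\[
\frac{\pi(X)}{n+1}=\frac{1}{n+1}+\frac{n}{n+1}\,\hat F_{n}(X),\qquad \frac{\nu(Y)}{n+1}=\frac{1}{n+1}+\frac{n}{n+1}\,\hat G_{n}(Y).
\]

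Next I would invoke the Glivenko--Cantelli theorem for the i.i.d.\ samples $X_{1},\dots,X_{n}$ and $Y_{1},\dots,Y_{n}$: $\sup_{x}|\hat F_{n}(x)-F(x)|\to0$ and $\sup_{y}|\hat G_{n}(y)-G(y)|\to0$ almost surely. Because the suprema run over all arguments, the bounds hold in particular at the random points $X$ and $Y$; combined with $\tfrac{1}{n+1}\to0$ and $\tfrac{n}{n+1}\to1$ this gives $\pi(X)/(n+1)\to F(X)$ and $\nu(Y)/(n+1)\to G(Y)$ almost surely, hence the claimed joint convergence. (Only that $X_{1},\dots,X_{n}$ are i.i.d.\ $F$ and $Y_{1},\dots,Y_{n}$ are i.i.d.\ $G$ is used here; the compact-support hypothesis is not invoked in this argument, and independence of $(X,Y)$ from the sample enters only in the distributional statement below.)

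For the asymptotic distribution, almost sure convergence implies convergence in law, so the limiting distribution of the pair is the law of $[F(X),G(Y)]$ when $(X,Y)\sim H$. By the probability integral transform, continuity of $F$ and $G$ makes $F(X)$ and $G(Y)$ uniform on $[0,1]$, and for $u,v\in[0,1]$
\[
P\!\left(F(X)\le u,\ G(Y)\le v\right)=H\!\left(F^{-1}(u),G^{-1}(v)\right)=C_{H}(u,v),
\]
which is exactly Sklar's representation of the copula of $H$; thus the asymptotic distribution is $C_{H}$.

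The only step calling for care is the use of Glivenko--Cantelli: one needs the \emph{uniform} convergence of $\hat F_{n}$ and $\hat G_{n}$, not merely pointwise convergence, because the evaluation points $X$ and $Y$ are random and are coupled to the sample through the joint ranking of the enlarged data; uniformity is precisely what makes that coupling harmless. Everything else — the $O(1/n)$ correction arising from including $(X,Y)$ in its own ranking, and the null set of ties — is routine under the continuity of $H$.
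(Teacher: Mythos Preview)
The paper states this lemma without proof; it is offered there only as ``a step toward determining asymptotic forms'' and the authors move directly on to Lemma~2 and Proposition~3. So there is no original argument to compare against.

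Your proof is correct and is the natural one: identify the normalized rank of the new observation with the empirical marginal CDF up to an $O(1/n)$ correction, invoke Glivenko--Cantelli to replace $\hat F_n$ and $\hat G_n$ by $F$ and $G$ uniformly (and you are right to stress that \emph{uniform} convergence is what lets you evaluate at the random points $X$ and $Y$), and then read off the copula via the probability integral transform and Sklar's theorem. Your remark that the compact-support hypothesis is not actually needed is also accurate; continuity of $H$ suffices for every step you use.

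One small point of friction with the paper's conventions: the ranking there is defined by $\pi(x_i)=\sum_j 1\{x_i\le x_j\}$, i.e.\ rank~1 is the \emph{largest} value, so literally $\pi(X)/(n+1)\to 1-F(X)$ rather than $F(X)$. You handled this by explicitly declaring that you read $\pi$ as the increasing rank; that is fine, and indeed the paper itself later writes $\hat U_i=1-\pi(X_i)/(n+1)$ in the proof of Proposition~4, which confirms the descending convention. Either way the limiting law is the copula $C_H$, since $(F(X),G(Y))$ and $(1-F(X),1-G(Y))$ have the same distribution whenever $C_H$ is radially symmetric, and in general the survival copula is recovered instead; the lemma as stated is most cleanly true under the ascending-rank reading you adopted.
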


\begin{lem}
Under independence, the asymptotic distribution of the scaled absolute
rank differences
\[
S_{n}=\left|\frac{\pi\left(X\right)}{n+1}-\frac{\nu\left(Y\right)}{n+1}\right|
\]
is $\textrm{Beta\ensuremath{\left(1,2\right)}}$.\end{lem}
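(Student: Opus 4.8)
The plan is to combine Lemma 1 with an elementary computation for the difference of two independent uniform variables. First I would note that under independence $H(x,y)=F(x)G(y)$, so the underlying copula $C_H$ is the product copula $\Pi(u,v)=uv$; equivalently, writing $U:=F(X)$ and $V:=G(Y)$, the pair $(U,V)$ consists of two independent $\mathrm{Uniform}(0,1)$ random variables. Lemma 1 then gives
\[
\left[\frac{\pi(X)}{n+1},\frac{\nu(Y)}{n+1}\right]\underset{a.s.}{\rightarrow}(U,V),
\]
and since $(u,v)\mapsto|u-v|$ is continuous, $S_n\underset{a.s.}{\rightarrow}|U-V|$; in particular $S_n$ converges in distribution to $|U-V|$, so the whole matter reduces to identifying the law of $W:=|U-V|$.

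Second, I would compute the CDF of $W$ directly. By symmetry of $U-V$ about $0$, for $t\in[0,1]$,
\[
P(W>t)=2\,P(U-V>t)=2\int_{0}^{1-t}\left(1-v-t\right)dv=(1-t)^{2},
\]
hence $P(W\le t)=1-(1-t)^{2}=2t-t^{2}$ on $[0,1]$. Differentiating yields the density $f_{W}(t)=2(1-t)$ on $[0,1]$, which is exactly the density $\frac{\Gamma(3)}{\Gamma(1)\Gamma(2)}t^{1-1}(1-t)^{2-1}$ of the $\mathrm{Beta}(1,2)$ distribution; as a sanity check, this is also the law of $\min(U,V)$ and of $1-\max(U,V)$. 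Together with the first step this shows the asymptotic distribution of $S_n$ is $\mathrm{Beta}(1,2)$.

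I do not expect a genuine obstacle, since Lemma 1 already delivers the asymptotic joint behavior of the scaled ranks; the points needing care are merely (i) recognizing that the statement ``the asymptotic distribution is the copula of $H$,'' specialized to the independent case, says precisely that $U$ and $V$ are independent uniforms, and (ii) the passage from almost sure convergence of the pair of scaled ranks to convergence in distribution of $S_n$ via continuity of the absolute-value map. Should one wish to bypass (ii), the same conclusion follows by checking $P(S_n\le t)\to 2t-t^{2}$ directly from convergence of the joint CDF of the scaled ranks to $\Pi(u,v)=uv$ at its (everywhere) continuity points.
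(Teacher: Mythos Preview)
Your argument is correct. The paper itself states this lemma without proof, so there is no argument to compare against; your route---invoking Lemma~1 to reduce $S_n$ to $|U-V|$ with $U,V$ independent $\mathrm{Uniform}(0,1)$ via the continuous map $(u,v)\mapsto|u-v|$, and then computing $P(|U-V|>t)=(1-t)^2$---is exactly the natural one and is presumably what the authors intended when presenting the lemma immediately after Lemma~1.
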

\begin{prop}
Under a Gaussian($\rho$) copula, $S_{n}$ converges to a $\textrm{Beta\ensuremath{\left(1,\beta\left(\rho\right)\right)}}$
distribution.
\end{prop}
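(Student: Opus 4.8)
The plan is to separate the two halves of the statement: the convergence, which is handed to us by Lemma~1, and the identification of the limit. By Lemma~1 the pair $\bigl(\pi(X)/(n+1),\,\nu(Y)/(n+1)\bigr)$ converges almost surely, hence in distribution, to $(U,V):=(F(X),G(Y))$, whose joint law is the copula of $H$ --- here the Gaussian copula $C_\rho$. Since $(a,b)\mapsto|a-b|$ is continuous, the continuous mapping theorem gives $S_n \underset{d}{\rightarrow} T:=|U-V|$ with $(U,V)\sim C_\rho$, so it remains to identify the law of $T$ as a $\textrm{Beta}(1,\beta(\rho))$ distribution and to pin down $\beta(\rho)$.

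I would read off $\beta(\rho)$ from the behaviour of $T$ near the origin --- the regime the CSF test actually exploits. The density of $T$ at $0^{+}$ equals $2\int_0^1 c_\rho(u,u)\,du$, where $c_\rho$ is the Gaussian copula density, and on the diagonal
\[
c_\rho(u,u)=\frac{1}{\sqrt{1-\rho^{2}}}\,\exp\!\left(\frac{\rho\,\bigl(\Phi^{-1}(u)\bigr)^{2}}{1+\rho}\right),
\]
so the substitution $u=\Phi(z)$ turns $\int_0^1 c_\rho(u,u)\,du$ into a Gaussian integral equal to $1/(1-\rho)$. A $\textrm{Beta}(1,\beta)$ density equals $\beta$ at the origin, which forces $\beta(\rho)=2/(1-\rho)$. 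This value passes the natural checks: $\beta(0)=2$ recovers Lemma~2; at $\rho=-1$ one has $V=1-U$, so $T=|2U-1|$ is exactly uniform, i.e.\ $\textrm{Beta}(1,1)$; and $\rho\to1$ gives $\beta(\rho)\to\infty$, matching $T\to0$.

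The remaining --- and main --- task is to show that the whole law of $T$, not merely its germ at $0$, is $\textrm{Beta}(1,2/(1-\rho))$; equivalently, using exchangeability of $C_\rho$ and conditioning on $U$,
\[
P(T>t)=2\int_t^1 \Phi\!\left(\frac{\Phi^{-1}(u-t)-\rho\,\Phi^{-1}(u)}{\sqrt{1-\rho^{2}}}\right)du \;=\;(1-t)^{2/(1-\rho)}.
\]
I expect this exact evaluation to be the crux. The route I would take is Price's theorem: the region $\{|U-V|>t\}$ pulls back under $(\Phi^{-1},\Phi^{-1})$ to a region of the plane that does not depend on $\rho$, so differentiating the probability in $\rho$ and using $\partial_\rho\phi_2=\partial_{z_1}\partial_{z_2}\phi_2$ converts $\partial_\rho P(T>t)$ into an integral of $\phi_2$ along the boundary curves $\Phi(z_1)-\Phi(z_2)=\pm t$; an integration by parts then produces a first-order relation in $(t,\rho)$ which, with the boundary data $P(T>t)|_{\rho=0}=(1-t)^2$ and $P(T>t)|_{t=0}=1$, should integrate to $(1-t)^{2/(1-\rho)}$. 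An alternative worth trying is a change of variables adapted to the diagonal direction $z_1-z_2$, which linearizes the constraint $\Phi(z_1)-\Phi(z_2)=t$ to leading order and may make the integral collapse directly. The delicate point in either approach is the bookkeeping along the boundary curve, where the $\phi_2$-weight carries a factor $\Phi^{-1}(u)-\rho\,\Phi^{-1}(u-t)$ that must be traded back against $P(T>t)$ and $\partial_t P(T>t)$; the extremal cases $\rho\in\{-1,0,1\}$, where the identity is transparent, serve as checks on the final formula, and if the relation fails to close in elementary form the statement is to be read as the member of the $\textrm{Beta}(1,\cdot)$ family matched at the origin, exact at those three values of $\rho$.
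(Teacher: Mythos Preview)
The paper itself offers \emph{no proof} of this proposition: immediately after stating it the authors write ``Although we do not have a formal proof for this proposition, many simulations with $n=1000$ affirm the proposition and produce a smooth curve for $\beta(\rho)$.'' So the benchmark you are trying to match is an empirical observation, not a demonstrated theorem.

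Your reduction is sound. Lemma~1 together with the continuous mapping theorem does give $S_n\Rightarrow T:=|U-V|$ with $(U,V)\sim C_\rho$, and your computation of the density of $T$ at the origin, $f_T(0^+)=2\int_0^1 c_\rho(u,u)\,du=2/(1-\rho)$, is correct. If the limiting law were $\mathrm{Beta}(1,\beta)$ for some $\beta$, this would indeed pin down $\beta(\rho)=2/(1-\rho)$.

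The gap is that the exact $\mathrm{Beta}(1,\beta)$ identification is \emph{false} for generic $\rho$, so the ``main task'' you set yourself cannot be completed. A clean way to see this is via the second moment. Under the Gaussian copula, $E[UV]=\tfrac14+\tfrac{1}{2\pi}\arcsin(\rho/2)$ (the standard Spearman formula), whence
\[
E[T^2]=E[(U-V)^2]=\tfrac{1}{6}-\tfrac{1}{\pi}\arcsin(\rho/2).
\]
For $\mathrm{Beta}(1,\beta)$ one has $E[T^2]=2/[(1+\beta)(2+\beta)]$. These agree at $\rho\in\{-1,0,1\}$ (your three checkpoints), but at $\rho=0.5$ the Gaussian value is $\tfrac16-\tfrac{1}{\pi}\arcsin(0.25)\approx 0.0863$, while $\beta=4$ gives $2/30\approx 0.0667$. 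Since any $\mathrm{Beta}(1,\beta)$ law is determined by $f_T(0^+)=\beta$, no choice of $\beta$ can repair this; the limit law of $S_n$ is simply not in the $\mathrm{Beta}(1,\cdot)$ family except at the endpoints. Your closing caveat --- that the proposition should be read as the $\mathrm{Beta}(1,\cdot)$ member matched at the origin, exact only at $\rho\in\{-1,0,1\}$ --- is therefore the right conclusion, and in fact sharpens what the paper asserts: you have supplied the closed form $\beta(\rho)=2/(1-\rho)$ for the matching parameter and shown that the fit is an approximation rather than an identity.
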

Although we do not have a formal proof for this proposition, many
simulations with $n=1000$ affirm the proposition and produce a smooth
curve for $\beta\left(\rho\right)$. See Figure 1 for one such example.

\begin{figure}
\includegraphics[scale=0.3]{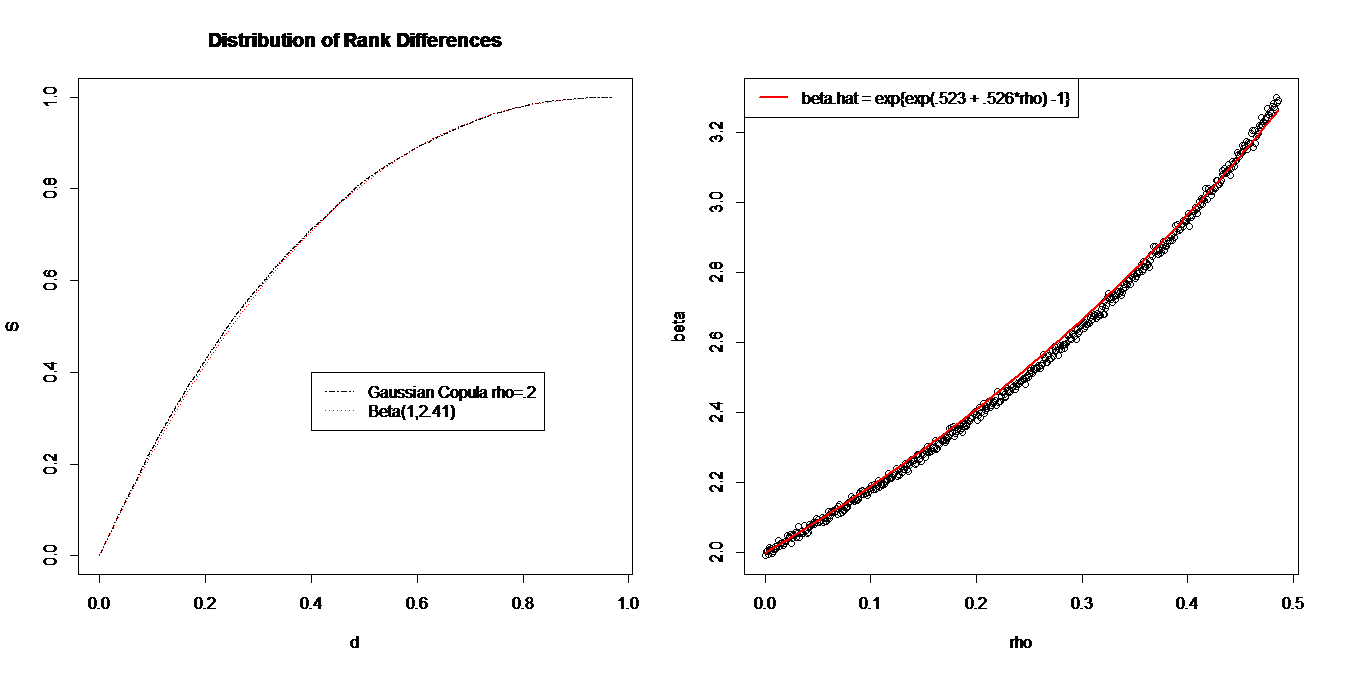}

\protect\caption{Left panel shows the close agreement of $S_{n}$ with $\textrm{Beta\ensuremath{\left(1,\beta\left(\rho\right)\right)}}$
when sampling $n=10,000$ observations from a Gaussian($\rho=0.2$)
copula. The right panel illustrates the $\mathbf{\beta\left(\rho\right)}$
curve for $0<\rho<0.5$.}
\end{figure}

The null hypothesis is that $\left(X,Y\right)$ have a Gaussian copula.
The alternative is that $\left(X,Y\right)$ come from a mixture of
two subpopulations in which under one they are independent, and under
the other they are positively associated. To test for negative association,
simply replace $Y$ by $-Y$. No particular form is assumed for the
positively associated subpopulation, but it is informative to examine
the case where this component is Gaussian. Figure 2 illustrates $S_{n}$
and its histogram under such a mixture.

\begin{figure}
\includegraphics[scale=0.35]{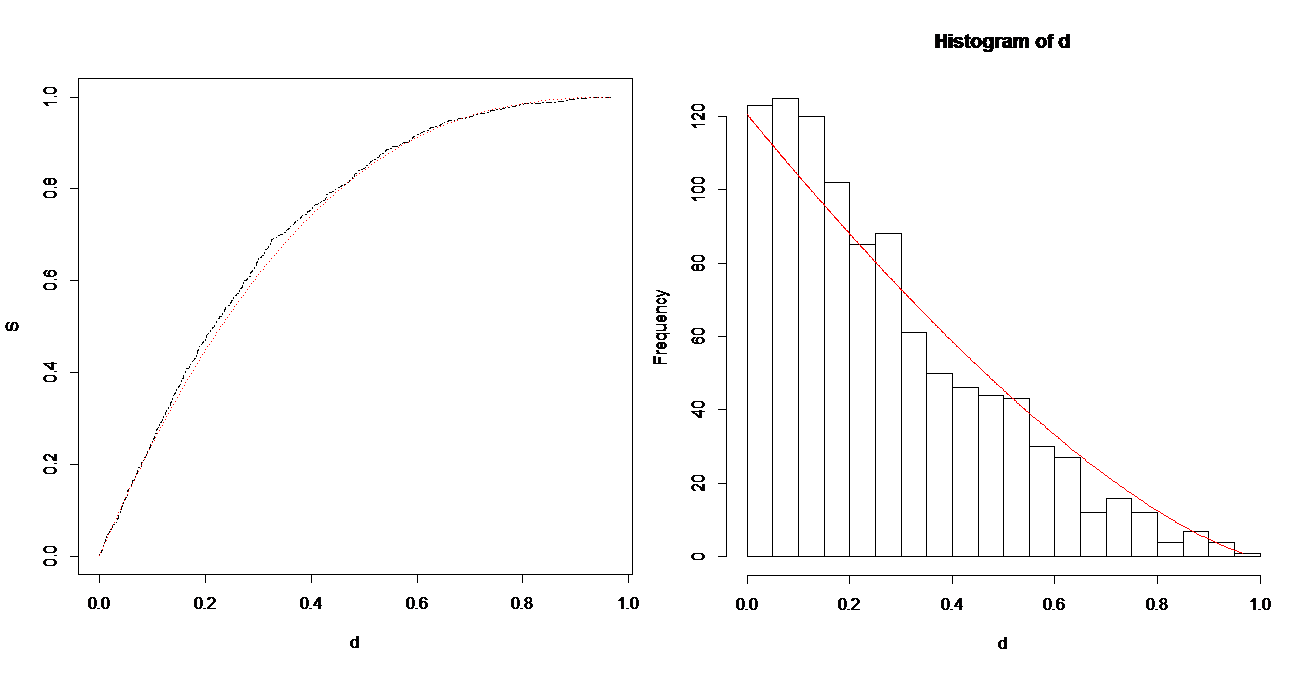}

\protect\caption{(Standardized) Distribution of Rank Differences Under Mixture of Gaussian(0.6)
and Independent Copulas with overall correlation 0.3 compared to Beta(1,2.65). }

\end{figure}

Because the differences in distributions under the null and alternative
are small, large samples are required to distinguish the two. As noted
from the histogram in Figure 2, most of the distinguishing information
is contained at the low end of the distribution. This makes sense
because a subpopulation supporting positive association should have
a surplus of points where the ranks of $X$ and $Y$ closely agree.
Thus a test statistic based on absolute ranked differences should
emphasize the lower order statistics. Such statistics come under the
heading of L-statistics. It is possible to tailor a test toward alternative
features of interest such as proportionate size of the subpopulation
and the strength of association within it. Exact distributions of
partial or weighted sums of absolute rank differences are quite complicated
due to dependencies (Sen, et al. 2011), even under the null hypothesis
of independence. A very simple general purpose test statistic is 
\[
T_{S}=\sum_{i=1}^{n}1\left\{ \frac{\left|\pi\left(x_{i}\right)-\nu\left(y_{i}\right)\right|}{n}<0.2\right\} .
\]
Using the observed overall correlation \emph{r} in place of $\rho$,
the null distribution of $T_{S}$ may be simulated under the Gaussian
copula or approximated as a Binomial test statistic using the probability
from the $\textrm{Beta}(1,\beta)$ as in Proposition 3. In the later
case, ignoring weak dependencies, the .05 level test has power of
80\% of detecting a Gaussian subpopulation of 25\% with \emph{r} =.8
for \emph{n} = 1000.

\section*{Components of Kendall\textquoteright s Tau (CKT)}

Although the CSF test is both simple and computationally efficient,
it has a conceptual shortcoming arising from the use of Spearman\textquoteright s
footrule distance to characterize association in a subpopulation.
The issue is that the components of the footrule distance in the subpopulation
depend on the encompassing population; that is, when the sample is
a full population, with associated subpopulation $\Omega$, the component
set from the footrule from $\Omega$
\[
\left\{ d_{i}\mid i\in\Omega\right\} =\left\{ \left|\pi\left(x_{i}\right)-\nu\left(y_{i}\right)\right|\mid i\in\Omega\right\} 
\]
depends heavily on the rankings $\pi$ and $\nu$ determined by the
full population. In contrast, the component set from Kendall\textquoteright s
distance depends only on the relative rankings within $\Omega$, which
may be constructed from just on the original values in $\Omega$.
That is,
\begin{eqnarray*}
\left\{ 1\left[\left(\pi\left(x_{i}\right)-\pi\left(x_{j}\right)\right)\left(\nu\left(y_{i}\right)-\nu\left(y_{j}\right)\right)<0\right]\mid i,j\in\Omega\right\} \\
=\left\{ 1\left[\left(x_{i}-x_{j}\right)\left(y_{i}-y_{j}\right)<0\right]\mid i,j\in\Omega\right\} 
\end{eqnarray*}
Thus the subpopulation discordances (components of Kendall\textquoteright s
distance) do not depend upon the embedding population, whereas the
subpopulation disarray (components of Spearman\textquoteright s footrule
distance) do. This invariance has a number of beneficial properties,
such as allowing the CKT test to retain power in situations where
the ranges of the $\left\{ X_{i}\right\} $ and $\left\{ Y_{i}\right\} $
values in the subpopulation are more restricted than those in the
full population. 

The notion of homogeneous association based on Kendall\textquoteright s
distance differs from that based the Spearman\textquoteright s footrule
used for the CSF test. In this case the natural null hypothesis should
be a distribution depending only on Kendall\textquoteright s distance.
Furthermore it should have the greatest entropy for a given value
of Kendall\textquoteright s tau because this formulation would attribute
as much variability as possible to the null distribution, making it
a conservative (least favorable) test (Lehmann and Romano, 2006).
To construct a distribution that has this structure, simply sample
from an arbitrary copula, and then reorder the \emph{Y}-values according
to a permutation $\nu\left(\mathbf{Y}\right)$ sampled independently
from a Mallows model centered at the ranking $\pi\left(\mathbf{X}\right)$
of the \emph{X}-values. Quite remarkably, any such process asymptotically
leads to a Frank copula. Proposition 4, based on Starr (2009), gives
a precise statement. 
\begin{prop}
Let $\left\{ \left(X_{i},Y_{i}\right)\mid i=1,\ldots,n\right\} $
be independent samples from a distribution $H$ with continuous marginals
$F$ and $G$, and associated copula $C$ with continuous partial
derivatives. Let $\pi\left(\mathbf{X}\right)$ be the ranking of $\pi\left(\mathbf{X}\right)=\left[X_{1},\ldots,X_{n}\right]$
and $\nu\left(\mathbf{Y}\right)$ be the ranking of $\nu\left(\mathbf{Y}\right)=\left[Y_{1},\ldots,Y_{n}\right]$
. Assume that for all $n$ sufficiently large, the conditional distribution
of $\nu\left(\mathbf{Y}\right)$ given $\pi\left(\mathbf{X}\right)$
is Mallows, with center at $\pi\left(\mathbf{X}\right)$ and scale
$\phi_{n}$. If $\phi_{n}\rightarrow0$, and there exists $\theta\neq0$
such that
\[
n\left(1-e^{-\phi_{n}}\right)\rightarrow\theta,
\]
then $C$ is the Frank Copula $C_{\theta}$.\end{prop}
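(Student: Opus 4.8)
We sketch a proof, reducing the statement to Starr's scaling limit for the Mallows measure. Write $q_{n}=e^{-\phi_{n}}$, so that the hypotheses read $q_{n}\to1$ and $n(1-q_{n})=n(1-e^{-\phi_{n}})\to\theta$, and let
\[
\mu_{n}:=\frac{1}{n}\sum_{i=1}^{n}\delta_{(\pi(X_{i})/n,\,\nu(Y_{i})/n)}
\]
be the (random) empirical measure of the scaled rank pairs. Because the $X_{i}$ are distinct and the $Y_{i}$ are distinct, $\mu_{n}$ is the rescaled permutation matrix of the permutation $\sigma_{n}$ carrying the $X$-rank of $(X_{i},Y_{i})$ to its $Y$-rank. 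The plan is to identify the weak limit of $\mu_{n}$ in two ways and compare.

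\emph{Copula side.} Up to an $O(1/n)$ error uniform in $i$, $\pi(X_{i})/n=\hat{F}_{n}(X_{i})$ and $\nu(Y_{i})/n=\hat{G}_{n}(Y_{i})$ (in the rank convention of Lemma 1). By Glivenko--Cantelli, $\sup_{i}|\hat{F}_{n}(X_{i})-F(X_{i})|\to0$ and $\sup_{i}|\hat{G}_{n}(Y_{i})-G(Y_{i})|\to0$ almost surely, so $\mu_{n}$ has the same weak limit as $\frac{1}{n}\sum_{i=1}^{n}\delta_{(F(X_{i}),G(Y_{i}))}$. Since $(F(X_{i}),G(Y_{i}))$ are i.i.d.\ with law $C$ on $[0,1]^{2}$, that measure converges weakly to $C$ almost surely; hence $\mu_{n}\to C$ weakly, almost surely. (This is Lemma 1 uniformized over $i$; any reflection introduced by the rank convention is immaterial, since the limiting density found below is radially symmetric.)

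\emph{Permutation side.} Since $d_{K}(\pi,\nu)$ is the number of inversions of $\sigma_{n}$, and since the i.i.d.\ continuous $X_{i}$ make $\pi(\mathbf{X})$ uniform on the symmetric group, the change of variables $\nu(\mathbf{Y})\mapsto\sigma_{n}$ turns the assumed Mallows law of $\nu(\mathbf{Y})\mid\pi(\mathbf{X})$ into the classical Mallows law $P(\sigma)\propto q_{n}^{\,\mathrm{inv}(\sigma)}$ for $\sigma_{n}$. In the regime $q_{n}\to1$ with $n(1-q_{n})\to\theta>0$, Starr (2009) shows that the rescaled permutation matrix of a Mallows$(q_{n})$ permutation converges, weakly and in probability, to the deterministic measure on $[0,1]^{2}$ with density
\[
\rho_{\theta}(x,y)=\frac{(\theta/2)\,\sinh(\theta/2)}{\left(e^{\theta/4}\cosh\left(\tfrac{\theta(x-y)}{2}\right)-e^{-\theta/4}\cosh\left(\tfrac{\theta(x+y-1)}{2}\right)\right)^{2}}.
\]
For $\theta<0$ (so $q_{n}>1$) the same holds after composing $\sigma_{n}$ with the order-reversing permutation, which amounts to $v\mapsto1-v$ and replaces $\theta$ by $-\theta$. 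Hence $\mu_{n}\to\rho_{\theta}\,dx\,dy$ weakly in probability, for every $\theta\neq0$.

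\emph{Matching and conclusion.} A sequence of random probability measures has at most one limit in probability, so the two identifications force $C=\rho_{\theta}\,dx\,dy$; since $C$ has uniform margins and continuous partial derivatives, $C(u,v)=\int_{0}^{u}\!\int_{0}^{v}\rho_{\theta}$. It remains to check that this is the Frank copula, i.e.\ that differentiating $C_{\theta}$ twice,
\[
\frac{\partial^{2}C_{\theta}}{\partial u\,\partial v}=\frac{-\theta(e^{-\theta}-1)\,e^{-\theta(u+v)}}{\left((e^{-\theta}-1)+(e^{-\theta u}-1)(e^{-\theta v}-1)\right)^{2}},
\]
yields $\rho_{\theta}$: cancelling $e^{-\theta(u+v)}$ between numerator and denominator and rewriting the remaining exponentials through $2\cosh t=e^{t}+e^{-t}$ and $2\sinh t=e^{t}-e^{-t}$ turns the right-hand side into $\rho_{\theta}$ exactly, while the case $\theta<0$ follows from $C_{-\theta}(u,v)=u-C_{\theta}(u,1-v)$. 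Thus $C=C_{\theta}$. The step I expect to be the main obstacle is the appeal to Starr's theorem: one must cite convergence of the random empirical measure itself (not merely of its expectation), in a mode strong enough---convergence in probability is sufficient---to be matched against the almost-sure convergence on the copula side, and one must verify that $\theta=\lim n(1-e^{-\phi_{n}})$ is exactly Starr's parameter, handling $\theta<0$ through the reversal symmetry of the Mallows measure. The closing density identity is routine, but the hyperbolic rearrangement and the bookkeeping of the $e^{\pm\theta/4}$ factors require care.
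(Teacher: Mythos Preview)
Your argument is correct and follows essentially the same route as the paper: form the empirical measure of scaled rank pairs, identify its limit as the underlying copula $C$ via consistency of the empirical copula, identify it separately as Starr's hyperbolic density via the Mallows assumption, and then recognize that density as $\partial^{2}C_{\theta}/\partial u\,\partial v$. The paper differs only cosmetically, inserting a preliminary step that deduces radial symmetry of $C$ from the Mallows symmetry (via Fermanian--Radulovi\'c--Wegkamp) to justify its choice of empirical-copula convention, whereas you dispose of the same convention issue parenthetically; you are also more explicit than the paper about matching the modes of convergence and about the $\theta<0$ case.
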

\begin{proof}
First, we establish that if the conditional distribution of $\nu\left(\mathbf{Y}\right)$
given $\pi\left(\mathbf{X}\right)$ is a Mallows distribution, then
the copula C is radially symmetric. The \emph{pseudo-observations}
for each pair $\left(X_{i},Y_{i}\right)$ are defined as functions
of the pair and the empirical margins
\begin{eqnarray*}
\left(\hat{U}_{i},\hat{V}_{i}\right) & = & \frac{n}{n+1}\left(\hat{F}_{n}\left(X_{i}\right),\hat{G}_{n}\left(Y_{i}\right)\right).
\end{eqnarray*}
These are functions of the rankings $\pi\left(\mathbf{X}\right)$
and $\nu\left(\mathbf{Y}\right)$ :
\begin{eqnarray*}
\hat{U}_{i}=1-\frac{\pi\left(X_{i}\right)}{n+1}, &  & \hat{V}_{i}=1-\frac{\nu\left(Y_{i}\right)}{n+1}.
\end{eqnarray*}
By the symmetry of the Mallows model, the joint distribution of the
pseudo-observations $\left(\hat{U}_{1},\hat{V}_{1}\right),\ldots,\left(\hat{U}_{n},\hat{V}_{n}\right)$
is identical to the joint distribution of\\
 $\left(1-\hat{U}_{1},1-\hat{V}_{1}\right),\ldots,\left(1-\hat{U}_{n},1-\hat{V}_{n}\right)$
. Consider empirical distributions based on these observations (Genest
and Nešlehová, 2014):
\begin{eqnarray*}
\hat{C}_{n}\left(u,v\right) & = & \frac{1}{n}\sum_{i=1}^{n}1\left\{ \hat{U}_{i}\leq u,\hat{V}_{i}\leq v\right\} \\
\hat{D}_{n}\left(u,v\right) & = & \frac{1}{n}\sum_{i=1}^{n}1\left\{ 1-\hat{U}_{i}\leq u,1-\hat{V}_{i}\leq v\right\} .
\end{eqnarray*}
Since $H$ has continuous marginals and $C$ has continuous partial
derivatives, then Fermanian et al. (2004) established that $\hat{C}_{n}$
is a consistent estimator of the copula $C$, and likewise $\hat{D}_{n}$
is a consistent estimator of the survival copula $\bar{C}$, where
\begin{eqnarray*}
\bar{C}\left(u,v\right) & = & u+v-1+C\left(1-u,1-v\right).
\end{eqnarray*}
Hence, $\bar{C}=C$, which implies that the copula $C$ is radially
symmetric (Nelsen 2006, pg. 37). Since $C$ is radially symmetric,
an asymptotically equivalent definition of the \emph{empirical copula}
is
\begin{eqnarray*}
\tilde{C}_{n}\left(u,v\right) & = & \frac{1}{n}\sum_{i=1}^{n}1\left\{ \frac{\pi\left(X_{i}\right)}{n}\leq u,\frac{\nu\left(Y_{i}\right)}{n}\leq v\right\} \\
 & = & \frac{1}{n}\sum_{i=1}^{n}\delta_{\left(\pi\left(X_{i}\right)/n,\nu\left(Y_{i}\right)/n\right)}
\end{eqnarray*}
which places mass of $\frac{1}{n}$ on each random point $\left(\frac{\pi\left(X_{i}\right)}{n},\frac{\nu\left(Y_{i}\right)}{n}\right)\in\left[0,1\right]^{2}$.
This empirical copula is expressed by the following point process
(Starr, 2009): For $n\in\mathbb{N}$,
\begin{eqnarray*}
\mu_{n}\left(B,\omega\right) & = & \frac{1}{n}\sum_{i=1}^{n}1\left\{ \left(\frac{\pi\left(X_{i}\right)}{n},\frac{\nu\left(Y_{i}\right)}{n}\right)\in B\right\} 
\end{eqnarray*}
for each bounded Borel set $B\subseteq\mathbb{R}^{2}$.

By assumption, the regularity conditions on the Mallows scale are
satisfied as $n\rightarrow\infty$:
\begin{eqnarray*}
\phi_{n}\rightarrow0, &  & \exists\theta\in\mathbb{R}/\left\{ 0\right\} \ni n\left(1-e^{-\phi_{n}}\right)\rightarrow\theta.
\end{eqnarray*}
Under these conditions, the primary result of Starr (2009) is applied:
As $n\rightarrow\infty$, the random measures $\mu_{n}\left(\cdot,\omega\right)$
weakly converge to the measure $\mu_{\theta}$, defined by
\begin{eqnarray*}
d\mu_{\theta}\left(u,v\right) & = & \frac{\left(\theta/2\right)\sinh\left(\theta/2\right)}{\left(e^{\theta/4}\cosh\left(\theta\left[u-v\right]/2\right)-e^{-\theta/4}\cosh\left(\theta\left[u+v-1\right]/2\right)\right)^{2}}I_{[0,1]^{2}}\left(u,v\right)\partial u\partial v.
\end{eqnarray*}
Simply converting the trigonometric functions to exponential form
and simplifying yields
\begin{eqnarray*}
d\mu_{\theta}\left(u,v\right) & = & \frac{\theta\left(1-e^{-\theta}\right)e^{-\theta\left(u+v\right)}}{\left(1-e^{-\theta}-\left(1-e^{-\theta u}\right)\left(1-e^{-\theta v}\right)\right)^{2}}I_{[0,1]^{2}}\left(u,v\right)\partial u\partial v.
\end{eqnarray*}
By recognition, the limiting measure $d\mu_{\theta}$ is that of the
(Frank) Copula $C_{\theta}$. Recall, $\tilde{C}_{n}$ is a consistent
estimator of the underlying copula $C$, and converges weakly to $C_{\theta}$,
so we conclude that $C=C_{\theta}$.
\end{proof}
Pursuing this result further allows for inspection of the adequacy
of the asymptotic result for finite samples. A function $\phi\left(\theta\right)$
for matching the Mallows $\phi$ parameter to the Frank $\theta$
parameter may be obtained by equating expressions for $\tau_{\phi}$
and $\tau_{\theta}$ from these models. For any Archimedian copula,
there is a relatively simple formula $\tau=4\textrm{E}\left[C\left(U,V\right)-1\right]$
(MacKay and Genest, 1986); for the Frank copula, a specialized form
(Nelsen, 2006, p. 171; Genest, 1987) is
\begin{eqnarray*}
\tau_{\theta} & = & 1-\frac{4}{\theta}\left[1-D\left(\theta\right)\right]=1-\frac{4}{\theta}\left[1-\frac{1}{\theta}\int_{0}^{\theta}\frac{t}{e^{t}-1}\partial t\right]
\end{eqnarray*}
where the scaled integral $D\left(\gamma\right)$ is known as the
Debye-1 function, available in the \textquotedblleft gsl\textquotedblright{}
(Gnu Scientific Library) package of R. For the Mallows model,
\begin{eqnarray*}
\tau_{\phi} & = & \frac{2}{\pi}\arctan\left(.18n\phi\right)
\end{eqnarray*}
Equating $\tau_{\theta}$ and $\tau_{\phi}$ leads to the relationship
\begin{eqnarray*}
\phi & = & \frac{100}{18n}\tan\left[\frac{\pi}{2}\left\{ 1-\frac{4}{\theta}\left[1-D\left(\theta\right)\right]\right\} \right]\approx\frac{.9694}{n}\theta.
\end{eqnarray*}
\begin{figure}
\includegraphics[scale=0.25]{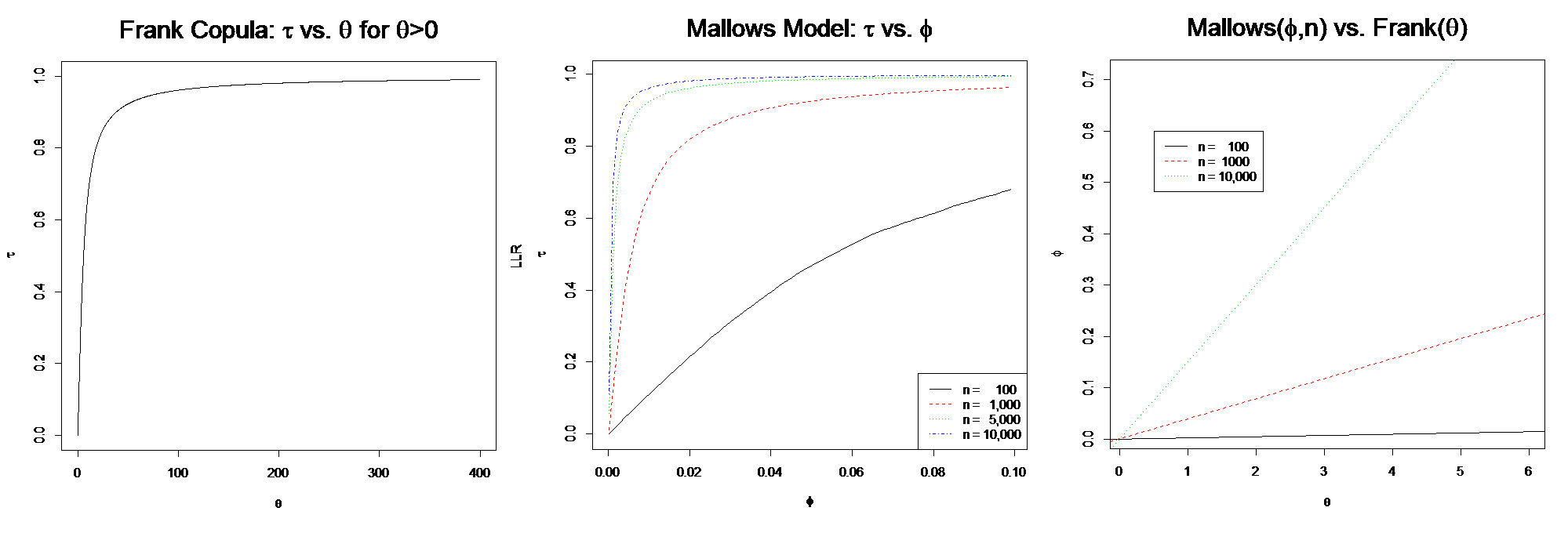}

\protect\caption{Scale relationships. Left: Kendall's $\tau$ vs. the Frank scale $\theta$;
Center: Kendall's $\tau$ vs. the Mallows scale $\phi$ for $n=100,1000,5000,10000$;
Right: Mallows $\phi_{n}$ vs. Frank$\theta$, for $n=100,1000,5000$. }
\end{figure}
Empirical evidence for the applicability of Proposition 4 comes in
two stages: 1) The distribution of Kendall\textquoteright s Distance
under $\textrm{Frank\ensuremath{\left(\theta\right)}}$ and under
$\textrm{Mallows\ensuremath{\left(\phi\left(\theta\right)\right)}}$
both converge to the same normal distribution; 2) As $n$ gets large
the product density of the sample under $\textrm{Frank\ensuremath{\left(\theta\right)}}$
converges to an increasing function of the Kendall\textquoteright s
Distance between $\pi\left(\mathbf{X}\right)$ and $\nu\left(\mathbf{Y}\right)$
of the sample. Figure 4 illustrates results from the following confirmatory
experiment: 
\begin{itemize}
\item Generate 1000 sets of 1000 points from a $\textrm{Frank\ensuremath{\left(\theta=3\right)}}$
copula 
\item Compute the Kendall distance $D$ and the Frank density $d$ for each
set 
\item Plot $d$ vs $D$ on a log-log scale 
\end{itemize}
\begin{figure}
\includegraphics[scale=0.35]{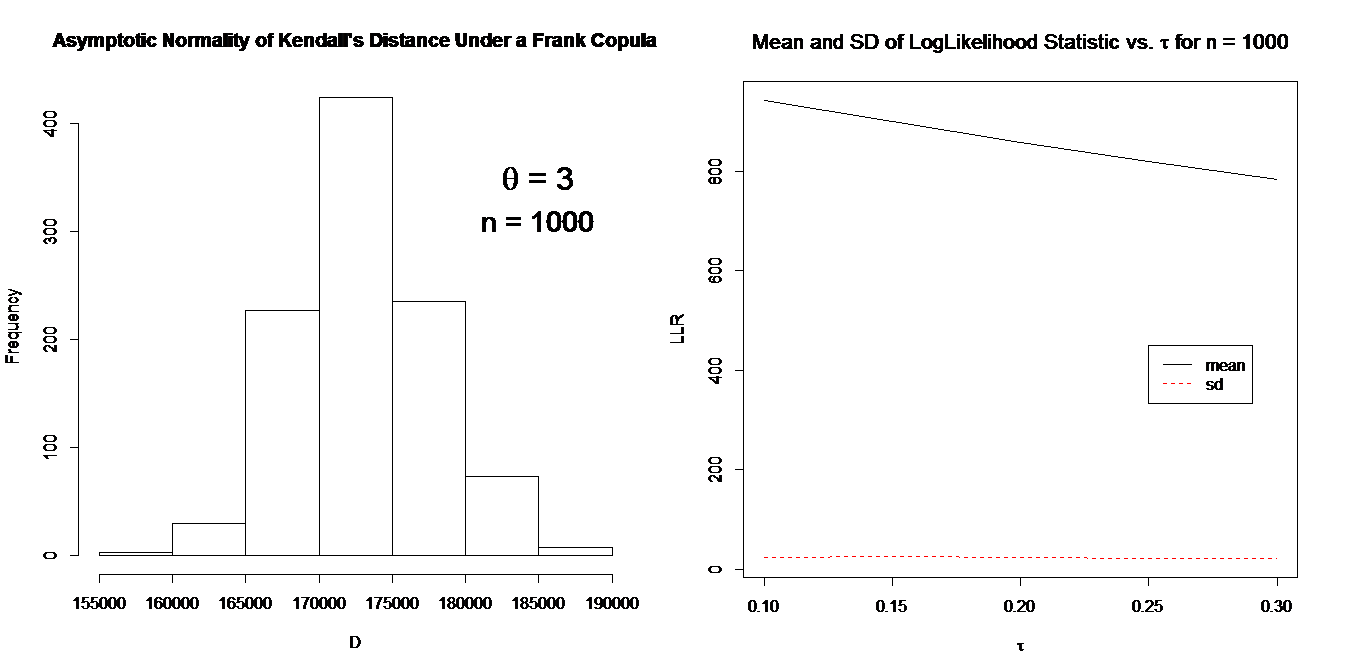}

\protect\caption{Association between $X$ and $Y$ in a Frank copula approaches a Mallows
model for the ranking $\nu\left(\mathbf{Y}\right)$ in a large sample
centered at $\pi\left(\mathbf{X}\right)$. Left: Approximate normality
Kendall's Distance $D$ in samples of size $n=1000$ from a $\textrm{Frank\ensuremath{\left(\theta=3\right)}}$
copula; Right: log-log plot of the density of each sample vs. its
Kendall's distance $D$.}

\end{figure}

Note also that the Frank copula is radially symmetric, $C\left(u,v\right)=u+v-1+C\left(1-u,1-v\right)$,
which is a necessary condition for the density of a sample to depend
only on its Kendall distance. With the assurance that there are copulas
with the conditional distribution of $\nu\left(\mathbf{Y}\right)$
given $\pi\left(\mathbf{X}\right)$ well approximated by a Mallows
model, this becomes the null hypothesis:
\begin{eqnarray*}
H_{0}:\nu\left(\mathbf{Y}\right)\circ\pi^{-1}\left(\mathbf{X}\right)\sim\textrm{Mallows}\left(\theta\right),\textrm{ for some }\theta>0.
\end{eqnarray*}
The general alternative against which we would like a test to be sensitive
is that there is a subpopulation with high association with the remainder
having (little or) no association. The test for heterogeneity should
maintain power over a wide variety of alternative distributions for
the subpopulation supporting strong association. With these considerations,
the alternative hypothesis is formulated as
\begin{eqnarray*}
H_{A}:\left(F\left(X\right),G\left(Y\right)\right)\sim M
\end{eqnarray*}
where $M$ is a mixture of two distributions: $H_{1}$ on which $\left(F\left(X\right),G\left(Y\right)\right)$
are independent and $H_{2}$ on which $\left(F\left(X\right),G\left(Y\right)\right)$
have $\tau>0$.

To test against such a general alternative, an adaptive model encompassing
the Mallows model is adopted, with the number of free parameters in
the model increasing with sample size. This components of Kendall\textquoteright s
tau (CKT) test proceeds in four steps:
\begin{enumerate}
\item Fit a Mallows model centered at $\pi\left(\mathbf{X}\right)$ to $\nu\left(\mathbf{Y}\right)$
and compute the likelihood. 
\item Reorder the data points $\left\{ \left(X_{i},Y_{i}\right)\mid i=1,\ldots,n\right\} $,
so that Kendall's tau coefficient is decreasing. See Yu\emph{ et al}.
(2011). Call the reordering $\sigma$.
\item Smoothly fit a multistage ranking model to the relative rankings of\\
$\left[Y_{\sigma\left(1\right)},\ldots,Y_{\sigma\left(k\right)}\right]$
to $\left[X_{\sigma\left(1\right)},\ldots,X_{\sigma\left(k\right)}\right]$
at each stage $k$. See Sampath and Verducci (2013). Compute the likelihood
under this (encompassing) model.
\item Use the (Generalized) Likelihood Ratio statistic to test $H_{0}$.
\end{enumerate}
Comments on the four steps: 
\begin{enumerate}
\item Since Kendall\textquoteright s tau distance is invariant to reordering
of observations, this is the same as fitting a Mallows model, centered
at ranking $\left(\sigma X\right)$, to the ranking $\left(\sigma Y\right)$,
where $\sigma$ is the taupath reordering.
\item The idea of reordering is to put the points displaying the highest
amount of association earlier in the sequence in order to identify
the subpopulation with highest empirical association. The reordering
is not unique. Yu \emph{et al. }(2011) discuss various algorithms. 
\item The multistage ranking model decomposes the number of discordances
{[}up to ($n$ choose 2){]} between ranking $\left(\sigma Y\right)$
and ranking $\left(\sigma X\right)$, as a sum of $n-1$ variables
$\left\{ V_{k}\right\} $ with ranges $\left\{ 0,\ldots,k\right\} $,
$k=1,\ldots,n-1$. The model has likelihood $L=c\left(\theta\right)e^{-\sum\theta_{k}V_{k}}$
which reduces to the likelihood of Mallows model when all component
parameters are equal. 
\item The conditions needed to justify an asymptotic chi-square distribution
for this statistic do not hold in this setting. Currently, we simulate
the distribution under the Frank copula to get an appropriate reference.
We are working to find a more precise characterization of the LR in
this setting. 
\end{enumerate}
The null distribution of this likelihood ratio appears to be close
to normal, with its mean decreasing with the common correlation $\tau$,
and standard deviation constant. See Figure 5. Note that, for $n=1000$,
the variance of $2\cdot\textrm{LLR}\approx2500$ is clearly less than
its $2\cdot\textrm{mean\ensuremath{\left(2\cdot\textrm{LLR}\right)}}$
theoretical value for a chi square distribution, which is in the range
$\left(3100,3800\right)$ when $\tau\in\left(.10,.30\right)$.

\begin{figure}
\includegraphics[scale=0.35]{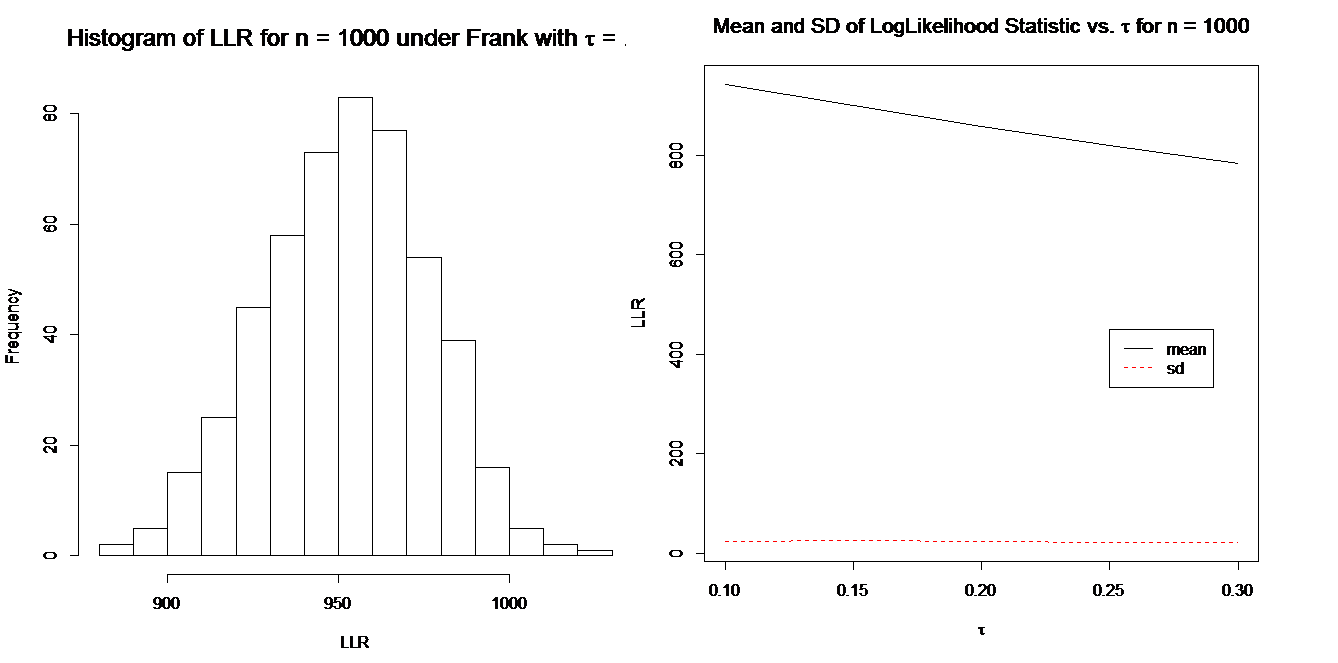}

\protect\caption{Simulation of Log Likelihood Ratio (LLR) Statistic for CKT test under
Frank Copulas. Left: Histogram of $500$ simulations of size $n=1000$.
Right: Decreasing pattern of mean and constancy of standard deviation
for LLR under Frank Copulas at different levels of $\tau$ for $n=1000$.}
\end{figure}

Instead of fixed $n$ and varying $\tau$, Figure 6 depicts the relationship
between \emph{LLR} and $n$ with fixed $\tau=0.1$. The overall relationship
between the moments of \emph{LLR} and the parameters $\tau$ and \emph{$n$}
is not yet known, but using a practical additive approximation in
the range $.1<\tau<.3$ and $500<n<3000$, the basic asymptotic \textgreek{a}-level
CKT test has the form: Reject $H_{0}$ if
\[
Z=\frac{\textrm{LLR}-\left(n+20-797\hat{\tau}\right)}{0.02n+7}>z_{1-\alpha},
\]
where $\hat{\tau}$ is Kendall\textquoteright s correlation coefficient
and $z_{1-\alpha}$ is the $\left(1-\alpha\right)^{\textrm{th}}$
quantile of the standard normal.

\section*{Simulations for Robustness and Power}

First, performance of the tests is checked by maintenance of levels
under various Gaussian and Frank copulas; subsequently power is examined.
The CSF test is based on the number of absolute rank differences less
than .2. Figure 7 shows the null distributions of p-values for the
CSF test applied to samples of size $n=1000$ generated 100,000 times
under the Gaussian($\rho$) models. These distributions start to become
stochastically smaller than uniform for $\rho>0.45$. Otherwise the
test is conservative in the range $0<\rho<0.45$ and $0<\alpha<0.05$
as illustrated by the observed number of type 1 errors at the $\alpha=0.05$
level.

\begin{figure}
\begin{centering}
\includegraphics[scale=0.35]{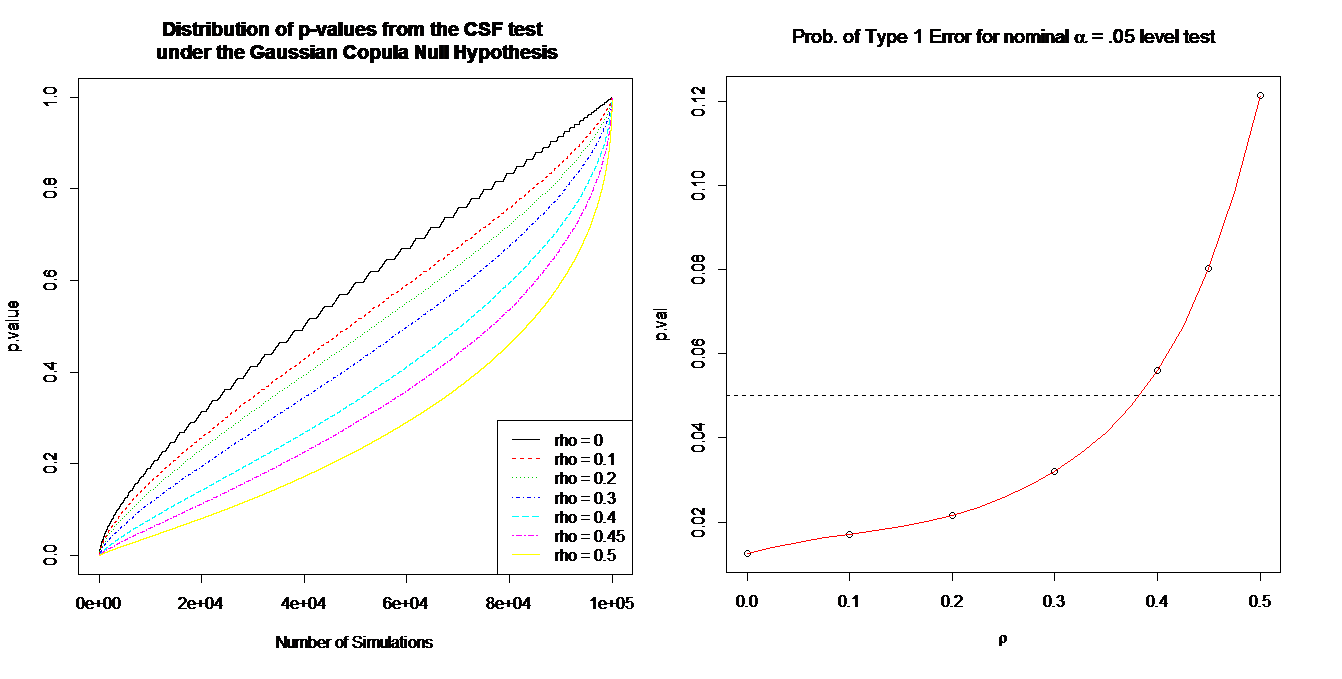}
\par\end{centering}

\protect\caption{CSF results from $100,000$ simulation experiments of size $n=1000$
for values of $\rho$ in a Gaussian copula. Left: Distribution of
p-values for 7 values of null correlation; Right: Observed probabilities
of Type I error for a nominal $\alpha=0.05$ test. The test is conservative
for values of $\rho<0.45$. }
\end{figure}

Under similar Gaussian copulas, the adjustment of the mean of the
log-likelihood for the estimated overall $\tau$ makes the CKT test
behave conservatively for large values of $\rho$, but gives highly
significant values for $r$ values near 0. See Figure 8, in which,
due to computational limitations, lowess-smoothed curves describe
the p-distribution based on only 100 simulations. In the presence
of very low overall correlation, it is advisable to use the CSF test
as a screen for the CKT, which will protect the CKT from finding uneven
levels of $\tau$ association when $\rho$ association is homogeneous.
Again, this tendency toward excess false positives happens only when
the overall $\rho$ association is close to 0. In this case a special
test (Sampath and Verducci, 2013) is available for the null hypothesis
of independence. Under a Frank copula, the CSF test behaves properly
near independence, but loses its level when \textgreek{t} gets large.
See Figure 9.

\begin{figure}
\includegraphics[scale=0.3]{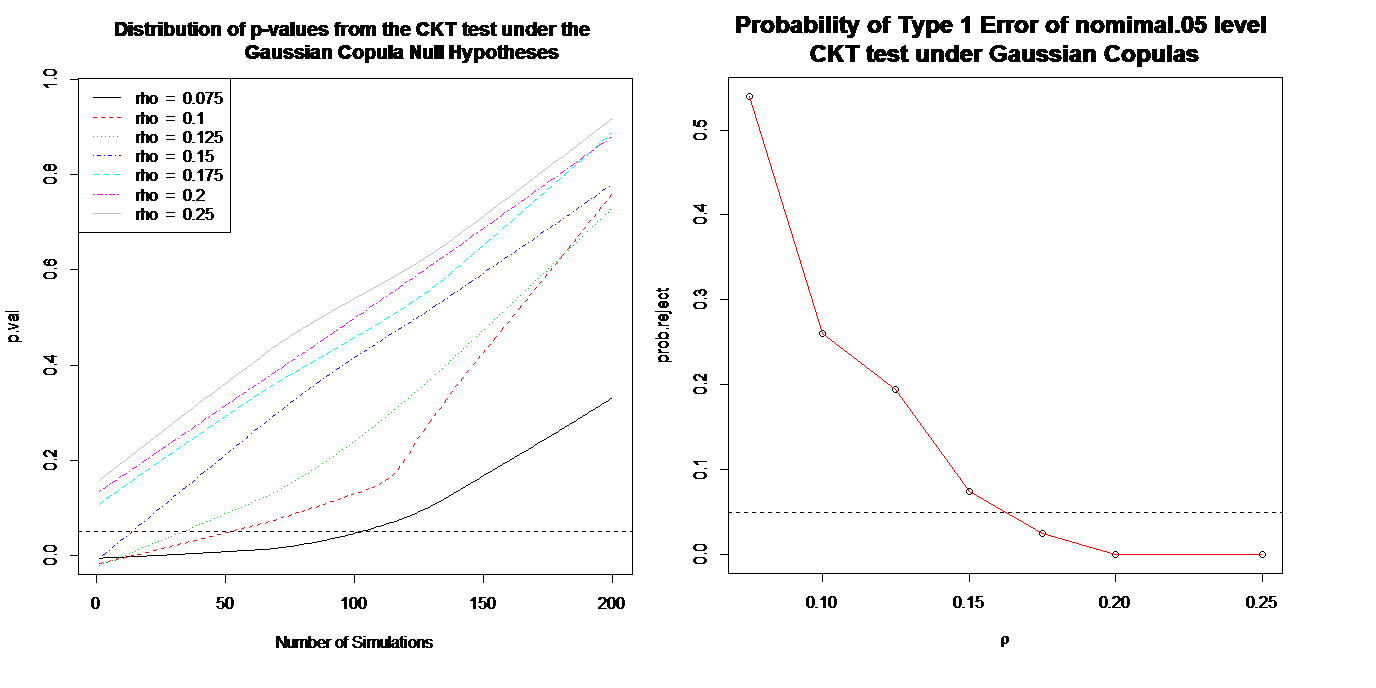}

\protect\caption{CKT results from 200 simulation experiments of size $n=1000$ for
values of $\rho$ in a Gaussian copula; Left: Distribution of p-values
for 7 values of the null correlation; Right: Obeserved probabilities
of Type I error for a nominal $\alpha=0.05$ test. The test is conservative
for values of $\rho>0.16$. }
\end{figure}

Several factors affect the power curves of both the CSF and CKT tests:
\emph{sample size} (\emph{n} is fixed at 500 or 1000); \emph{proportionate
size of the subpopulation} (fixed at 40\%); \emph{strength of association
in the subpopulation} $\left(\rho,\tau\in\left\{ .7,.8,.9\right\} \right)$;
and, most importantly, the \emph{form of the subpopulation}. Against
the null hypothesis of a Gaussian copula, the alternative is a mixture
of copulas, where the variables are assumed to be independent in the
complement of the subpopulation. Against the null hypothesis of a
Frank discordancescopula, the subpopulation is selected at random
and its conditional distribution is forced into a stronger Mallows
model. This allows the population margins to remain uniform while
possibly restricting the range of the subpopulation.

Figure 10 shows the distribution of p-values of both CSF and CKT tests
against 40\% Gaussian with $.12\leq\rho\leq.13$. For this range of
overall correlation the CKT test holds its level and is conservative
for overall correlation $\rho<.12$, which is the case here. Nevertheless,
it achieves perfect power when the subpopulation $\rho\geq.125$,
even though its power quickly diminishes to 10\% for $\rho=.12$ in
the subpopulation. It also performs better than CSF in this range.

\begin{figure}

\begin{centering}
\includegraphics[bb=0bp 160bp 700bp 840bp,clip,scale=0.4]{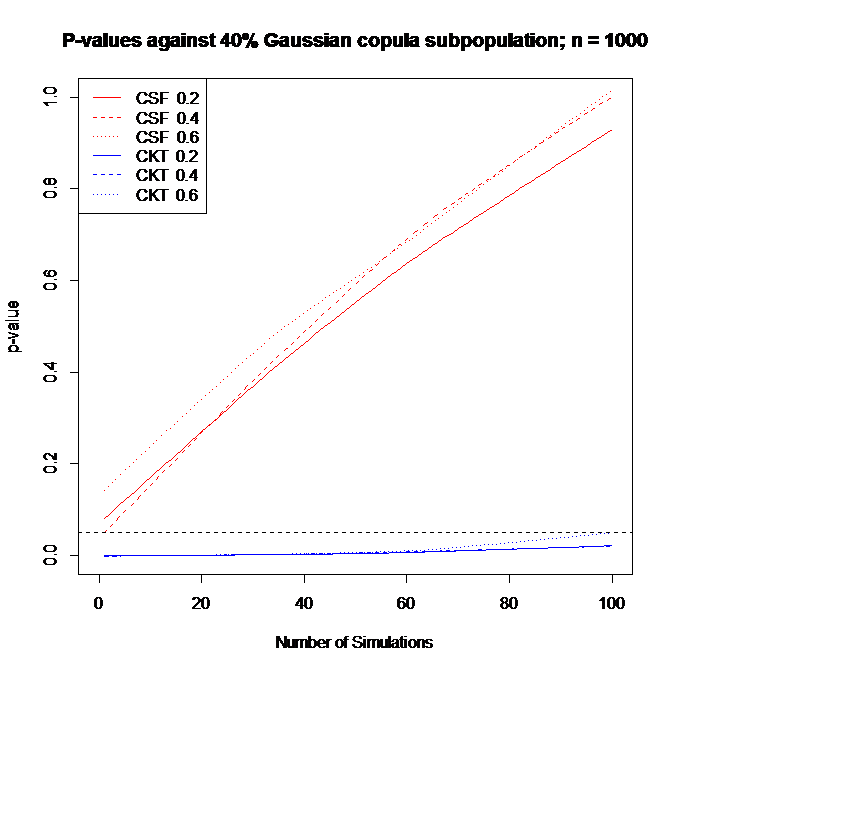}
\par\end{centering}

\protect\caption{P-values against Alternative Mixture of Gaussian copulas with sample
size $n=1000$ and subpopulation proportion $=40\%$. Dashed horizontal
lines at $\alpha=0.05$ indicates the power of a level 0.05 test.}

\end{figure}

Under the Mallows alternative, $n=1000$ points are generated from
a uniform distribution, 400 points are then sampled from a quantile
range of x values and the y values resorted according to a random
draw from a $\textrm{Mallow\ensuremath{\left(\phi\left(\tau\right)\right)}}$
model. Values of $\tau$ used are .4, .5, and .6. Figure 11 shows
the distributions of p-values from the $\alpha=0.05$ level CSF and
CKT tests over 100 simulations. The left panel corresponds to the
subpopulation being sampled from the full range, while the right panel
corresponds to samples between the 20th and 80th percentiles of x-values.
The CKT test performs much better than the CSF test against these
alternatives. The CKT has essentially perfect detection when the subpopulation
spans the whole range, and at least 70\% power in the 20-80 percentile
range. The CSF has no power in either scenario.

\begin{figure}
\begin{centering}
\includegraphics[scale=0.3]{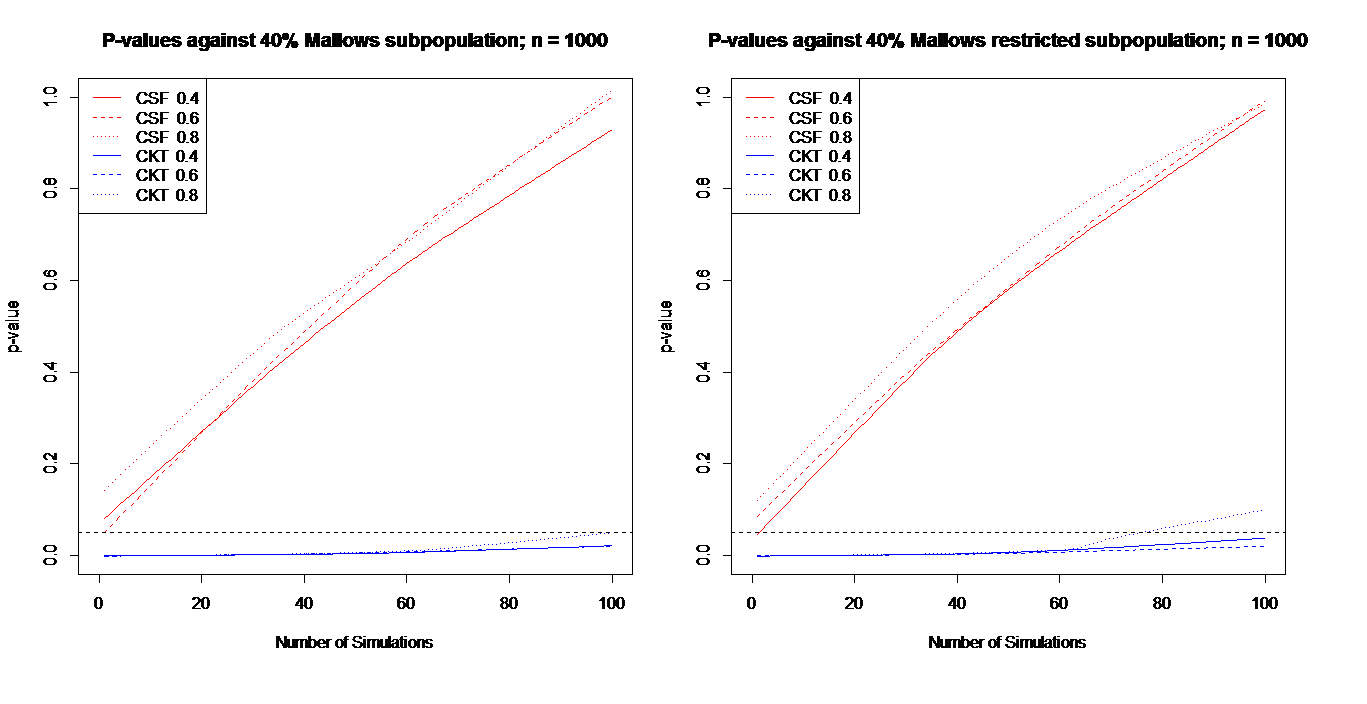}
\par\end{centering}

\protect\caption{Distribution of p-values for Mallows alternatives, based on 100 simulations
of sample size $n=1000$. Left: Associated subgroup spans the full
range of x-values. Right: Span is restricted to the 20-to-80th percentiles
region of x-values in the larger population.}

\end{figure}

\section*{Example}

Wine cultivars are varieties of grapes that have been cultivated through
selective breeding. Different varieties may be characterized by certain
chemical properties of the wine they produce. Early work in supervised
learning has been used to classify wine cultivars using chemical measurements
of wine sample (Aeberhard, et al. 1993). These data, available at
\href{http://archive.ics.uci.edu/ml/machine-learning-databases/wine}{(http://archive.ics.uci.edu/ml/machine-learning-databases/wine)},
are reanalyzed here using the CKT and CSF tests as unsupervised methods
of detecting different association structures that might help characterize
different cultivated varieties. 

Figure 12 shows the relationship between flavenoids and phenols in
the data set consisting of 13 measurements from 178 wine samples derived
from 3 different cultivars. To the untrained eye, the overall plot
looks typical of homogeneous association, but both the CKT (\emph{p}
=.0002) and CSF (\emph{p}=.027) indicate heterogeneity. Identification
of cultivars in the plot shows separation of cultivar 1 and 3 samples
from each other, with slightly negative association within each of
these groups; however, their positioning contributes a kind of ecological
correlation to the overall sample. In contrast, samples from cultivar
2 show a strong positive association between flavenoid and phenol
content. This suggests an underlying genetic difference.

It is impressive that CKT can detect this heterogeneity of association
from the unlabelled data, which looks like an overall positive association,
part of which is ecological correlation. Although the CSF test does
also indicate association, it is not as sensitive at detecting it
in this situation, and its p-value would not present a strong case
for heterogeneity if any correction is attempted for multiple comparisons
over the 13 choose 2 (78) pairs of variables available.

\section*{Concluding Remarks}

The ability to detect subpopulations that drive association has the
potential of changing the way statistics are used to unveil structures
in \textquotedblleft Big Data.\textquotedblright{} Instead of employing
extensive model searching with complex interaction, now relatively
model-free methods are available to ascertain with precision is there
is any simple mixture that better explains monotone association between
variables. The CSF and CKT tests achieve this, either working together
to screen and confirm or separately to find different forms of the
subpopulation that most strongly supports the association. 

These tests, however, are formally restricted to different forms of
the meaning of \textquotedblleft homogeneous association.\textquotedblright{}
Strict legitimacy of the CSF test depends on the assumption of a Gaussian
copula underlying the null distribution, whereas the CKT test depends
on the assumption of a Frank copula underlying the null distribution.
Although there is some evidence of limited robustness, much more work
should be done to explore the behavior of these tests under general
conditions. For example, both the Gaussian and Frank copulas are radially
symmetric; it is unclear how sensitive the tests would be to asymmetric
notions of homogeneous association. 

The computationally efficiency of the CSF test is important because
the sample size n needs to be in the thousands before there is much
hope of reliably detecting these subtle but important differences.
In contrast with the CSF test, the justification of CKT is a bit more
compelling, based on intrinsic association within the subpopulation.
We have been using CSF at a liberal $\alpha=0.05$ level as a screening
devise to reduce the number of pairs of variables to be tested at
a more stringent level. 

Detecting heterogeneity of association is a difficult task. Such detection
is practical only when the overall association is not too strong,
the association in the subpopulation is strong, and the sample size
is large. Nevertheless, such scenarios abound. We believe that these
new methods will make Statistics ever more relevant in making good
sense from Big Data.

\end{document}